\newtheorem{theorem}{Theorem}
\newtheorem{proposition}[theorem]{Proposition}
\newtheorem{definition}[theorem]{Definition}
\definecolor{maroon}{RGB}{192,80,77}
\def\E{\mathbb{E}}
\def\P{\mathbb{P}}
\def\cA{\mathcal{A}}
\def\cS{\mathcal{S}}
\newcommand{\method}{CRT\xspace}
\title{Provably Confidential Language Modelling}
\author{Xuandong Zhao ~~~ Lei Li  ~~~ Yu-Xiang Wang \\
  University of California, Santa Barbara\\
  \texttt{\{xuandongzhao,leili,yuxiangw\}@cs.ucsb.edu} \\
}
\begin{document}
\maketitle

\begin{abstract}
% The first sentence: define the problem. 
% Second sentence: existing methods have limitations or define challenge to solve the problem. 
% Third sentence: In this paper, we propose \method, a novel method to xxxx. 
% Fourth sentence: key insight or intuitive or novelty of the method. 
% Fifth sentence: We evaluate \method on xxx public datasets. 
% Experiments show that our proposed \method achives the state of the art with xxx score on ....

% memorize confidential text in the training data such as social security numbers and generate them verbatim. Given the sheer size of the dataset, it is infeasible to manually screen and mask these confidential text. Automated techniques based on Regular Expressions or machine learning are typically employed but they seldomly catch all confidential text (perfect recall) unless we mask almost everything. 

Large language models are shown to memorize privacy information such as social security numbers in training data. 
Given the sheer scale of the training corpus, it is challenging to screen and filter all privacy data, either manually or automatically.
In this paper, we propose \textbf{C}onfidentially \textbf{R}edacted \textbf{T}raining (\method), a method to train language generation models while protecting the confidential segments.  
We borrow ideas from differential privacy (which solves a related but distinct problem) and show that our method is able to \emph{provably prevent} unintended memorization by randomizing parts of the training process. 
Moreover, we show that redaction with an approximately correct screening policy \emph{amplifies} the confidentiality guarantee. 
We implement the method for both LSTM and GPT language models. 
Our experimental results show that the models trained by \method obtain almost the same perplexity while preserving strong confidentiality\footnote{Our code is available at \url{https://github.com/XuandongZhao/CRT}}.

\end{abstract}

\section{Introduction}
\label{sec:intro}

Language models (LM) have rich real-world applications in, among others, machine translation \cite{Bahdanau2015NeuralMT}, AI chatbots \cite{HosseiniAsl2020ASL}, question answering \cite{Kwiatkowski2019NaturalQA}, and information retrieval \cite{Ganguly2015WordEB}. 
The advent of transformers \cite{Vaswani2017AttentionIA} has fostered a dramatic advancement in the capabilities of generative neural language models, yet they come at a cost to privacy, as the amount of excess parameters in the LM enables it to memorize certain training samples. Recent works show that sensitive user information from the training dataset, such as address and name, can be extracted verbatim from text generation models by querying the LM as an API \cite{Carlini2019TheSS, Carlini2021ExtractingTD, lee2021deduplicating}. How to train a high-performing language model without memorizing sensitive text has become a major research challenge. 

Existing solutions to this problem primarily leverage differential privacy (DP) \citep{dwork2006calibrating}. 
 
Differentially private learning algorithms ensure that an attacker could not infer whether a data point is used for training, let alone extracting the sensitive information within that data point.

However, there are several mismatches between the problem of \emph{privacy} that DP addresses, and our problem of preventing the memorization of sensitive text (henceforth referred to as \emph{confidentiality}). First, confidential information in a natural language dataset is sparse (e.g., the bulk of an email might not carry confidential information). DP's undiscriminating protection for all sentences could be unnecessarily conservative which limits the utility of the trained model. 
Second, what needs to be protected is the content of the sensitive text, rather than the data context. For example, in the sentence \texttt{``My SSN is 123-45-6789.''}, it is the actual SSN that we hope to conceal rather than the general information that someone entered her SSN in a chatbot dialogue. 
Thirdly, the same sensitive content could appear in many data points, which makes the protection of the content more challenging than protecting one data sample. 
These differences motivate us to treat the problem of confidentiality protection in LM separately with new definitions.

Besides DP, we also consider classical techniques of redaction and deduplication. \emph{Redaction} refers to the process of removing sensitive or classified information from a document prior to its publication in governmental and legal contexts.   \emph{Deduplication} is the procedure of detecting and removing identical and nearly identical texts from a corpus. %, often used as a preprocessing technique prior to the training of language models. 
The main challenge of applying these techniques is that it is hard to manually redact a gigantic dataset and automated tools are far from being perfect. 

% Their work is inspired by an observation that in many scenarios, private information is sparse, and not all attributes need to be protected.
%The general differentially private training methods provide undiscriminating protection for all tokens of the data. Given that the private information in natural language is sparse (e.g., the bulk of an email might not carry personally identifiable information), 

\begin{figure*}[t]
\centering
\includegraphics[width=1.0\linewidth]{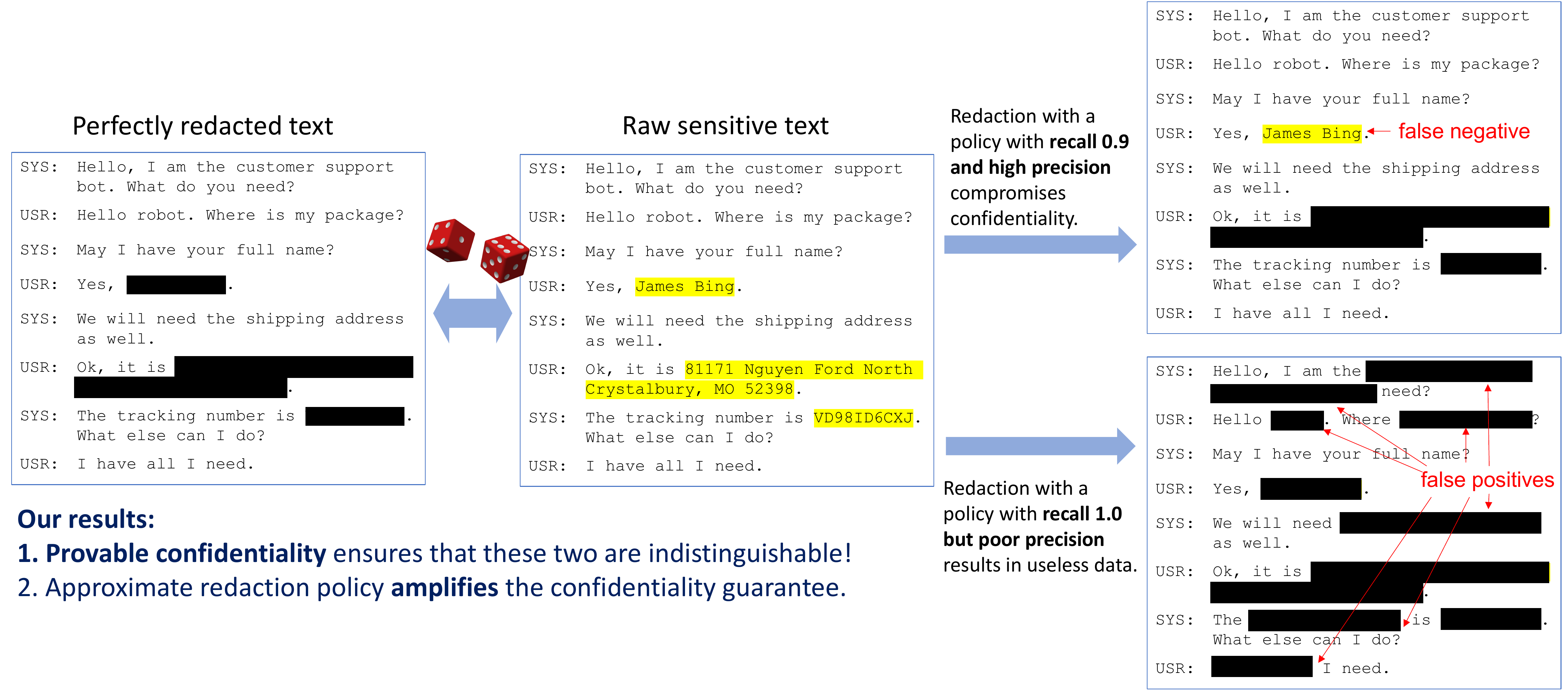}
\caption{An example from simulated dialog dataset \texttt{CustomerSim}. The yellow highlights are confidential content (middle). Left shows the text after \emph{Redaction} by a sequence labeling policy $\pi$. However, if the policy is not perfect, there exists false negative or false positive samples as shown  on the right.}
\label{fig:example}
% \vspace{-2mm}
\end{figure*}

The contribution of this paper is fivefold.
\begin{enumerate}
    \item We show that in the absence of a perfect screening policy, the risk of a language model memorizing sensitive content is real and can be efficiently exploited with only blackbox access to the model even if the learning algorithm satisfies the recently proposed notion of \emph{selective differential privacy} \citep{Shi2021SelectiveDP}.
    \item Inspired by differential privacy, we introduce a new definition of \emph{confidentiality} which precisely quantifies the risk of leaking sensitive text. 
    \item We propose \method to train language generation models while protecting confidential text. The method with deduplication and redaction operations work even under imperfect confidential text labeling policies. 
    \item We theoretically prove that \method, combined with differentially private stochastic gradient descent (DP-SGD), provides strong confidentiality guarantees. 
    \item Our experiments on both MultiWOZ 2.2 and CustomerSim datasets show that different models trained by \method can achieve the same or better perplexity than existing solutions (against the attacks of \citet{Carlini2019TheSS, Carlini2021ExtractingTD}).
\end{enumerate}
To the best of our knowledge, we are the first that rigorously establish the role of deduplication and redaction in achieving provably stronger confidentiality (or the related differential privacy) guarantees; and the first that achieve provably confidentiality in transformer models with only a mild utility loss.

\section{Background \& Related Work}
\label{sec:related}

Next, we briefly introduce the relevant background and discuss the related work to put our work in context.

% \noindent\textbf{Language modeling.}~
% \paragraph{Language modeling.}
% and a core building block for state-of-the-art NLP applications
% One popular objectives used to train language models (LM) is to predict the next tokens given the context \cite{Bengio2000ANP, Howard2018UniversalLM}. 
Language modeling is a fundamental problem in natural language processing  \cite{Devlin2019BERTPO, Howard2018UniversalLM, Raffel2020ExploringTL}. Consider a text sequence that consists of multiple tokens from a vocabulary $\mathcal{V}$, i.e., $\boldsymbol{w}=$ $\left(w_{1}, w_{2}, \ldots, w_{n}\right)$, where $w_{i}$ is the $i$-th token. The goal of language modeling is to construct a generative model of the distribution $\Pr(\boldsymbol{w})$, by applying the chain rule
$
   \Pr(\boldsymbol{w})=\prod_{i=1}^{n} \Pr \left(w_{i} \mid \boldsymbol{w}_{<i}\right).
$
We let $f_\theta(w_i| \boldsymbol{w}_{<i})$ denote the likelihood of token $w_i$ when evaluating the neural network $f$ with parameters $\theta$. A language model is trained to maximize the probability of the data in a training set $\mathcal{W}$, by minimizing the negative log-likelihood over each training example with the loss function
$
    \mathcal{L}(\theta)=-\log \prod_{i=1}^{n} f_{\theta}\left(w_{i} \mid \boldsymbol{w}_{<i}\right).
$
Recurrent neural networks (RNNs) used to be a common choice for the neural network architecture to estimate the probability distribution $\Pr(\boldsymbol{w})$. \cite{Hochreiter1997LongSM, Mikolov2010RecurrentNN}. More recently, large-scale Transformer-based language models have replaced RNNs in state-of-the-art models for all sorts of NLP tasks \cite{Vaswani2017AttentionIA, Radford2019LanguageMA}. Nevertheless, common language models are vulnerable to privacy attacks and possibly expose information about their sensitive training data \cite{Carlini2019TheSS, Carlini2021ExtractingTD}. 

% \noindent\textbf{Privacy-preserving NLP.}~ 
% \paragraph{Privacy-preserving NLP.}
Differentially private (DP) learning methods \citep[see, e.g.,][]{Abadi2016DeepLW} has been applied to language models as a blanket solution for a number of privacy and security risks. \citet{McMahan2018LearningDP} trained an RNN language model with DP guarantees in a federated learning setup. \citet{Anil2021LargeScaleDP} pre-trained BERT under DP on datasets with hundreds of millions of examples. These paper also demonstrated that DP can effectively prevent data-extraction attacks in practice even for algorithms with DP guarantees that are considered too weak from a theoretical-perspective (e.g., $\epsilon = 8$ or $16$). However, the strong protection of DP often results in a substantial drop in the utility of the trained model, which makes them less desirable in practice. In fact, it was recently shown that it is \emph{necessary} for deep learning models to memorize certain training data to achieve high accuracy \cite{feldman2020does}, which suggests that DP or any other techniques that require the model to not memorize any training data will perform poorly in the high-dimensional, power-law distributed real datasets. This motivates us to consider weakened models that only prevent memorizing the sensitive part of the text. 

Recent works \cite{lee2021deduplicating, kandpal2022deduplicating} show that deduplication enables language models to emit memorized text less frequently with same or better accuracy. However, deduplicating training datasets can not prevent all unintended memorization. We combine deduplication and redaction and then apply both techniques to the training process of LM to achieve confidentiality with provable guarantee.

% \noindent\textbf{Selective DP-SGD.}~
% \paragraph{Selective DP-SGD.}
The closest to us is perhaps the work of \citet{Shi2021SelectiveDP}, who proposed \textit{selective differential privacy} (S-DP), which requires indistinguishability between two datasets that differ only on a sensitive message. Correspondingly, they propose an algorithm (Selective DP-SGD) for training RNN that adds noise only to the part of computation that involves sensitive tokens. To define S-DP and to run Selective DP-SGD, one needs to have access to a policy function $F$ which determines which token is sensitive. This requirement limits the applicability of their approach to those applications where such perfect $F$ is known.  We note that even for name-entity recognition the state-of-the-art model is far from being perfect, and which part of the text is considered sensitive is often ambiguous even for human annotators. We will see that naively running Selective DP-SGD with an approximate policy function does not provide a meaningful confidentiality guarantee and is vulnerable to practical data extraction attacks. Finally, we note that in the case when a perfect policy function is available, we can simply use it for redaction, which provides a perfect S-DP with $\epsilon=0$. A big part of our contribution is to refine S-DP to a (slightly different) definition called ``confidentiality'' and to demonstrate that we use an approximate screening policy to amplify the confidentiality parameter.

%one can design a sensitive information filter and mask all confidential texts, which solve the memorization problem easily. 

%Different from existing works that directly apply DP-SGD and give undifferentiated protection over all training examples, \citet{Shi2021SelectiveDP} propose a new notion, namely \textit{selective differential privacy} or S-DP, to provide targeted protection for sensitive portion of the data and improve the model utility

%\subsection{Selective-DPSGD}
%Our work is inspired by \textit{selective differential privacy} \cite{Shi2021SelectiveDP}. In many scenarios including language modeling, private information is sparse, and not all the attributes require protection. For example, for the sentence “My SSN is 12345-6789”, only the last part with the actual SSN number needs to be protected.

%\citet{Shi2021SelectiveDP} develop a privacy mechanism, Selective-DPSGD, to achieve the new S-DP notion for RNNs. Compared with vanilla DP, \textit{selective differential privacy} (S-DP) adds additional requirements on neighboring datasets by a policy function $F$. To realize this S-DP guarantee, they propose Selective-DPSGD (Figure \ref{fig:SDPSGD}). The basic idea is to first determine the private attributes with the policy function, then decide which model variables are related to the private attributes, and finally apply regular SGD on non-private variables and DP-SGD \cite{Abadi2016DeepLW} on the private variables. 

\section{The \method Method and Theory}%: \method}
\label{sec:approach}

In this section, we develop our method with provable confidentiality.

\subsection{Formally defining confidentiality}
\label{sec:confidential}
Let the dataset be a collection of $n$ data points --- each being a sequence of tokens. A ``secret'' $x$ is a contiguous subsequence of tokens within a data point that is considered \emph{sensitive} or \emph{confidential}. The goal of our research is to allow us to train language models on such datasets that could contain secrets while provably prevent the model from remembering that these secrets were. We start by defining a formal definition of confidentiality, which uses the following idea of indistinguishability from the DP literature.
\begin{definition}[Indistinguishability]
We say that a pair of distributions $P,Q$ defined on the same probability space are	$(\epsilon,\delta)$-indistinguishable  if for any measurable set $S$, 
\begin{align*}
\Pr_{P}[S] \leq e^\epsilon \Pr_Q[S] + \delta.
\end{align*}
%where the probability is over the randomness of  $\cA$ only.
\end{definition}
\begin{definition}[Confidentiality]\label{def:confidentiality}
	We say that $\cA$ ensures that a secret $x$ is $(\epsilon(x),\delta)$-confidential, if for any dataset $D$ that contains $x$ in one of its data points, and an alternative  dataset $D'$ that replaces $x$ in $D$ with a generic \texttt{<MASK>}, it holds that  $(\cA(D),\cA(D'))$  are $(\epsilon(x),\delta)$-indistinguishable.  In addition, we simply say that $\cA$ ensures $(\epsilon,\delta)$-confidentiality  if $\epsilon(x)\leq \epsilon$ for all secret $x$.
\end{definition}
This definition ensures that an attacker cannot distinguish from the output of $\cA$ (the trained language model) whether it was $x$ or \texttt{<MASK>} that was used for training, thus formalizing the idea of confidentiality. The protection should be viewed as relative, rather than absolute. The definition bounds the risk of any bad event  by an multiplicative factor of $e^\epsilon$ and an additive factor of $\delta$, which implies that anything that could happen when we run $\cA$ on the sensitive data could've happened with with similar probability even if $\cA$ runs on an alternative world where these sensitive information are perfectly masked.

\paragraph{Connections to differential privacy.} Our definition of confidentiality is related to (and inspired by) $(\epsilon,\delta)$-differential privacy (DP) but is different in several ways. DP is stronger (and implies confidentiality!) requires $\cA$ to ensure $(\epsilon,\delta)$-indistinguishability for all $D,D'$ that can be modified from each other by adding or removing one individual person / data point (or tokens, depending on the desired granularity); but for $\cA$ to ensure $(\epsilon,\delta)$-confidentiality, it only requires $(\epsilon,\delta)$-indistinguishability for specific $D,D'$ where $D'$ replaces $x$ in $D$ with \texttt{<MASK>}. %Second, in our definition, $x$ could appear many times in the datasets, thus making some pairs of $D,D'$ not covered by the standard DP. Lastly, 
Moreover, it is more informative to define $\epsilon$ as a function of each specific $x$, which is different from DP (it resembles personalized DP \citep{ghosh2015selling}).

The confidentiality definition makes sense for our problem because it protects the content of the sensitive text $x$ rather than its existence. Specifically, a pre-processing algorithm that masks all sensitive text ensures $(0,0)$-confidentiality but does not satisfy any non-trivial DP guarantees. 
%DP is also not strictly stronger because the sensitive text could appear for multiple times.

Sometimes, it is useful to consider the confidentiality of multiple secret texts. For example, a secret key $x$ could appear multiple times in multiple data points.  Also, there might be multiple secret texts that are correlated to each other such that the knowledge of one would reveal other secrets. 

%, and the following definition mirrors the idea of small group privacy in DP.
\begin{definition}[Group Confidentiality]
	We say that $\cA$ ensures that a \textbf{list} of  sensitive texts $\cS := [x_1,...,x_k]$ is  $(\epsilon(\cS),\delta)$-(group) confidential, if for any dataset $D$ that contains $[x_1,...,x_k]$ in up to $k$ data points,  and $D'$ being the version that replaces each element in $\cS$ with \texttt{<MASK>}, it holds that $(\cA(D),\cA(D'))$ are $(\epsilon(\cS),\delta)$-indistinguishable.
\end{definition}
A special case of such group confidentiality is when $\cS$ collects the \emph{all secret text} in $D$, which protects all secret texts uniformly.  We call this \emph{uniform-confidentiality}.  Note that the standard definition of confidentiality also protect every secret $x$, except that it protects each secret $x$ individually, rather than together.

Inspired by the recent development of Bayesian DP \citep{triastcyn2020bayesian}, we also define Bayesian confidentiality as follows. 
\begin{definition}[Bayesian Confidentiality]
	Let $D$ be a dataset that is fixed except a random secret $x \sim\mu$ drawn from some distribution $\mu$. Let $D'$ be obtained by replacing $x$ with \texttt{<MASK>}\footnote{Notice that $D'$ is fixed even though $x$ is random.}. Then $\cA$ ensures $(\epsilon,\delta)$-Bayesian Confidentiality if for any $D'$,
	$(\cA(D),\cA(D'))$ is $(\epsilon,\delta)$-indistinguishable, where $\cA(D)$ is jointly distributed over $x\sim \mu$ and $\cA$.
\end{definition}
The Bayesian confidentiality measures how much information an attacker could gain if he/she's prior knowledge about this secret $x$ is described by the distribution $\mu$.  This is a strict generalization because when $\mu$ is a single point mass at $x$, it recovers Definition~\ref{def:confidentiality}. The additional generality allows us to quantify the stronger confidentiality guarantee against weaker adversaries without complete information.

\subsection{Confidentially redacted training}
In this section we describe the \method method to train language models with provable confidentiality guarantee. 
It includes two pre-processing operations (deduplication and redaction)  and a switching optimization procedure.
The overall idea is to screen the corpus into two separate sets, one public set including sentences with no confidential information, and one private set including sentences containing confidential content. 
We then use normal optimization algorithms (e.g. SGD) on the public set and differential privacy optimizer (e.g. DP-SGD) on the private set.

% \noindent\textbf{Deduplication.}
\paragraph{Deduplication.}
The deduplication procedure $\mathrm{Dedup}$ detects all sentences that appear multiple times in the training data and replace them into a single \texttt{<MASK>} from the second occurrence onwards (\texttt{<MASK>} is for proving purpose). 

% \noindent\textbf{Redaction.} 
\paragraph{Redaction.} The redaction procedure $\mathrm{Redact}_\pi$ takes applies a sequence labelling policy $\pi$ to screen confidential content in the training corpus $D$. $\pi(s, x)=1$ if a token $x$ in a sentence $s$ should be confidential. The labeled span in each detected sentence is replaced with a special token \texttt{<MASK>}. Note that we do not assume the policy is perfect. It may label some non-sensitive tokens as sensitive (false positives) and label some sensitive text as non-sensitive (false negative, or $1-$recall).

$\mathrm{Redact}$ and $\mathrm{Dedup}$ could be implemented manually, but with the large text corpus nowadays it is more common that these procedures are implemented using automated tools.  For example, $\mathrm{Dedup}$ could be implemented efficiently with  just one pass of data using a \emph{bloom filter} \citep{bloom1970space} (or other hashing tricks that also catches near-duplicates). Bloom filter in particular, enjoys the nice property that it could have false positives but never any false negatives.
$\mathrm{Redact}_\pi$ could be realized by a named entity recognition (NER) model or a personal-identifiable information (PII) detector.

\begin{algorithm}[tb]
\SetAlgoLined
\SetKwInOut{Input}{Input}
\Input{Dataset $D$ (after tokenization / splitting), labelling policies $\pi,\pi_c$, number of epochs $T$ }
% , language model $M$, loss function $\mathcal{L}$.}
% $D^{pri} = \{\}, D^{sen} = \{\}$\\
% \For{$x_i$ in $D$}
% {
%     \uIf{$\pi(x_i) = 1$} 
%     {$D^{pri} \leftarrow D^{pri} \cup \{\texttt{<MASK>}\}$ }
%     \Else{$D^{pub} \leftarrow D^{pub} \cup \{ x_i \} $}
% }
$D' \leftarrow \textrm{Dedup}(D)$\\
$D'' \leftarrow \textrm{Redact}_\pi(D')$\\
 $D^{pri}\leftarrow \{ s \in D'' | \exists x\in s \text{ s.t. } \pi(s, x)=1 \text{ or } \exists x\subset s \text{ s.t. }\pi_c(s, x) = 1  \}$ \\
 $D^{pub} \leftarrow  \{s\in D'' | s\notin D^{pri}\}$\\ 
 \For{$e = 1,...,T$}
 {
 Run one epoch of SGD with $D^{pub}$\\
 Run one epoch\footnotemark{} of DP-SGD with $D^{pri}$}
% (The total number of iterations on $D^{pri}$, sampling probability, noise-level and clipping threshold are chosen such that it satisfies $(\epsilon,\delta)$-DP.)
%  and approximate non-sensitive group $D^{pub}$.\\
% Run deduplication process $\texttt{Dedup}(D^{pri})$. \\
% \uIf{$\pi$ is not perfect (recall < 1)}
% {Run SGD on $D^{pub} \cup D^{pri}$.
% } \Else{
% Run SGD on $D^{pub}$ and run DP-SGD with $(\epsilon, \delta_1)$ on $D^{pri}$ 
% }
\caption{\method }\label{alg:drs-dpdgd}
\end{algorithm}
\footnotetext{ DP-SGD uses Poisson-sampled Gaussian mechanisms (with a random batchsize), thus cannot ensure all data points are seen and some data points might be seen many times. One epoch means the number of iterations that in expectation covers $|D^{pri}|$ data points.}

Finally, \method combines the two pre-processing steps with normal optimizer and DP-SGD, the  standard algorithm for deep learning with differential privacy. A pseudo-code of the algorithm is given in Algorithm~\ref{alg:drs-dpdgd}.

Besides using a sequence labeling policy $\pi$ with balanced precision/recall as part of the redaction process. The algorithm uses another, more conservative, policy $\pi_c$ with nearly perfect recall to decide on the data points that do not contain sensitive text. In the situation when such $\pi_c$ isn't available, we simply choose $\pi_c(s, x) = 1$ for all tokens $x$ in a sentence $s$ and the second part becomes the vanila DP-SGD. It is also important that every data point that contains a \texttt{<MASK>} requires protection.

\subsection{Theoretical analysis}
We analyze the theoretical properties of the above method and show that they result in provable improvements in the (regular, group and Bayesian) confidentiality parameters for any algorithms that are provably $(\epsilon(x),\delta)$-confidential as defined in Section~\ref{sec:confidential}.

The following theorem captures the benefit of redaction in improving confidentiality.
\begin{proposition}[Confidentiality under redaction]\label{prop:masking_conf}
		If $\cA$ ensures $(\epsilon(x),\delta)$-Confidentiality for each token $x$ of sentence $s \in \cS$ ($\cS$ is a corpus), then $\cA\circ \mathrm{Redact}_\pi$ ensures $(\tilde{\epsilon}(x),\delta)$-confidentiality with 
		$$
		\tilde{\epsilon}(x) = \begin{cases}
			\epsilon(x) &\text{ if }\pi(s, x) = 0\\
			0 &\text{ otherwise.}
			\end{cases}
		$$	
		In addition, $\cA\circ \mathrm{Redact}_\pi$ also satisfies $(\tilde{\epsilon}(S),\tilde{\delta}(S))$-group confidentiality with
		\begin{align*}
		    &\tilde{\epsilon}(S) = \sum_{x\in s \& s\in \cS} \epsilon(x) \mathbf{1}(\pi(s, x) = 0),\\ 
		    &\tilde{\delta}(S) = \tilde{k}e^{\tilde{\epsilon}(S)} \delta
		\end{align*}
		
		where $\tilde{k}:=  \sum_{x\in S} \mathbf{1}(\pi(s, x) = 0)$.
\end{proposition}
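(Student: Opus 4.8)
The plan is to unwind the definitions and reduce to the confidentiality guarantee of $\cA$, then obtain the group statement by a group-privacy-style telescoping over the secrets. Throughout I use the mild assumption that masking a single confidential token does not change the policy's labels on the remaining tokens of the corpus (so the redaction step is ``local'' in the relevant sense). Under this assumption, for any dataset $D$ containing a token $x$ of a sentence $s$ and the corresponding $D'$ that replaces that occurrence by \texttt{<MASK>}, the redacted datasets $\mathrm{Redact}_\pi(D)$ and $\mathrm{Redact}_\pi(D')$ agree everywhere except possibly at the position of $x$.

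For the per-token claim, I would split on $\pi(s,x)$. If $\pi(s,x)=1$, then $\mathrm{Redact}_\pi$ masks the position of $x$, which was already masked in $\mathrm{Redact}_\pi(D')$; together with the previous observation this gives $\mathrm{Redact}_\pi(D)=\mathrm{Redact}_\pi(D')$, so $\cA(\mathrm{Redact}_\pi(D))$ and $\cA(\mathrm{Redact}_\pi(D'))$ are identically distributed, hence $(0,\delta)$-indistinguishable, matching $\tilde\epsilon(x)=0$. If $\pi(s,x)=0$, then redaction leaves the position of $x$ untouched, so $\mathrm{Redact}_\pi(D)$ still contains the secret $x$ while $\mathrm{Redact}_\pi(D')$ is exactly $\mathrm{Redact}_\pi(D)$ with that occurrence of $x$ replaced by \texttt{<MASK>} — precisely the pair appearing in Definition~\ref{def:confidentiality} for the secret $x$, now applied to the dataset $\mathrm{Redact}_\pi(D)$. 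Since $\cA$ is $(\epsilon(x),\delta)$-confidential for $x$ on \emph{every} dataset containing $x$, including $\mathrm{Redact}_\pi(D)$, the two outputs are $(\epsilon(x),\delta)$-indistinguishable, matching $\tilde\epsilon(x)=\epsilon(x)$.

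For the group claim, fix the list $S$ of secrets, a dataset $D$ containing them, and the fully masked $D'$. I would interpolate with $D=D_0,D_1,\dots,D_k=D'$, where $D_j$ masks the first $j$ secrets, so that $\mathrm{Redact}_\pi(D_{j-1})$ and $\mathrm{Redact}_\pi(D_j)$ differ only at the position of the $j$-th secret $x_j$: they coincide when $\pi$ flags $x_j$, and otherwise form a single-secret confidentiality pair. By the per-token argument, $\cA(\mathrm{Redact}_\pi(D_{j-1}))$ and $\cA(\mathrm{Redact}_\pi(D_j))$ are $(\tilde\epsilon(x_j),\delta_j)$-indistinguishable with $\tilde\epsilon(x_j)=\epsilon(x_j)\mathbf{1}(\pi(s,x_j)=0)$ and $\delta_j=\delta\,\mathbf{1}(\pi(s,x_j)=0)$. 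Chaining the $k$ bounds with the elementary composition of indistinguishability — if $(P,Q)$ is $(a_1,b_1)$-indistinguishable and $(Q,R)$ is $(a_2,b_2)$-indistinguishable then $(P,R)$ is $(a_1+a_2,\,b_1+e^{a_1}b_2)$-indistinguishable — yields, for $\cA\circ\mathrm{Redact}_\pi$ on $(D,D')$, the cost $\sum_j \tilde\epsilon(x_j)=\tilde\epsilon(S)$ and additive term $\sum_j e^{\sum_{i<j}\tilde\epsilon(x_i)}\delta_j \le e^{\tilde\epsilon(S)}\sum_j \delta_j = \tilde k\,e^{\tilde\epsilon(S)}\delta=\tilde\delta(S)$, bounding each partial exponent by the full sum and using that exactly $\tilde k$ of the $\delta_j$ equal $\delta$ (the rest being $0$).

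The two definitional reductions are routine; the step I expect to need the most care is the $\tilde\delta(S)$ bookkeeping in the telescoping, where the $e^{\tilde\epsilon(S)}$ inflation of the additive slack is the usual price of group composition for approximate indistinguishability. The one genuinely load-bearing modelling assumption is that $\mathrm{Redact}_\pi$ alters only the intended positions, which is why I would state it explicitly rather than leave it implicit.
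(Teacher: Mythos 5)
Your proposal is correct and follows essentially the same route as the paper's proof: the per-token claim is handled by observing that $\mathrm{Redact}_\pi(D)=\mathrm{Redact}_\pi(D')$ when $\pi(s,x)=1$ and that the redacted pair remains a valid single-secret confidentiality pair when $\pi(s,x)=0$, and the group claim is obtained by the same telescoping/group-privacy composition yielding the additive term $\sum_j e^{\sum_{i<j}\tilde\epsilon(x_i)}\delta_j \le \tilde k\,e^{\tilde\epsilon(S)}\delta$. Your explicit statement of the locality assumption on $\mathrm{Redact}_\pi$ (that masking one token does not perturb the labels elsewhere) is a point the paper leaves implicit in the proof but acknowledges in its appendix discussion of preprocessing alignment, so making it explicit is a reasonable refinement rather than a departure.
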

As an application of the above, if $\cA$ ensures $(\epsilon,\delta)$-confidentiality, and that the empirical recall rates of the redaction policy on $D$ is $1-\gamma$, then the above proposition suggests that $\cA\circ \mathrm{Redact}_\pi$ improves the uniform-confidentiality over applying $\cA$ without redaction by a factor of $\gamma$. The proof is in the appendix.

Redaction also improves Bayesian confidentiality in a way that mirrors the privacy amplification by sampling from the DP literature.
\begin{proposition}[Bayesian Confidentiality under Redaction]\label{prop:masking_bayesian}
	If $\cA$ ensures $(\epsilon,\delta)$-Bayesian Confidentiality with respect to $\mu[x|\pi(s,x)=0]$ for a token $x$ in a  sentence $s$,
	then $\cA\circ \mathrm{Redact}_\pi$ ensures $(\log(1+\gamma (e^{\epsilon}-1)),\gamma\delta)$-Bayesian Confidentiality under $\mu$ if $\pi$ has a  false negative rate (i.e., $1- $``Recall'') of $\gamma$ under $\mu$. 
\end{proposition}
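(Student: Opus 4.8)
The plan is to replicate the classical ``privacy amplification by subsampling'' argument, with the random subsampling coin replaced by the (random) event of whether the screening policy $\pi$ happens to catch the secret. Fix the reference dataset $D'$, in which the secret has been replaced by \texttt{<MASK>}, and set $Q := (\cA\circ\mathrm{Redact}_\pi)(D')$; since $D'$ is non-random, $Q$ is a fixed distribution over outputs. Now $D$ agrees with $D'$ except that it carries the random secret $x\sim\mu$ inside some fixed sentence $s$. Because $s$ is fixed, the predicate $b(x):=\mathbf{1}(\pi(s,x)=0)$ is a deterministic function of $x$, and the false-negative-rate hypothesis says $\Pr_{x\sim\mu}[b(x)=1]=\gamma$ (recall is a probability over the secret $x$, which is always a ``true positive''). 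I would condition the output of $(\cA\circ\mathrm{Redact}_\pi)(D)$ on this event.

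On the event $b(x)=0$ (probability $1-\gamma$), $\mathrm{Redact}_\pi$ replaces the secret span by \texttt{<MASK>}, so the post-redaction corpus equals $\mathrm{Redact}_\pi(D')$ and the conditional output distribution is exactly $Q$. On the event $b(x)=1$ (probability $\gamma$), $\mathrm{Redact}_\pi$ leaves the secret in place; the only randomness feeding $\cA$ is the token $x$, now distributed as $\mu[\,\cdot\mid\pi(s,x)=0]$ with everything else fixed, which is precisely the configuration governed by the hypothesis that $\cA$ is $(\epsilon,\delta)$-Bayesian-confidential with respect to $\mu[\,\cdot\mid\pi(s,x)=0]$ (the ``for any $D'$'' quantifier in the definition of Bayesian confidentiality lets us take the fixed background to be $\mathrm{Redact}_\pi$ of our corpus minus the token $x$). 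Writing $P_0$ for this conditional output distribution, the hypothesis gives that $(P_0,Q)$ are $(\epsilon,\delta)$-indistinguishable, and the unconditional output on $D$ is the mixture $P := (\cA\circ\mathrm{Redact}_\pi)(D) = (1-\gamma)\,Q + \gamma\,P_0$.

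What remains is an elementary scalar computation. For any measurable $S$,
\begin{align*}
\Pr_{P}[S] &= (1-\gamma)\Pr_{Q}[S] + \gamma\Pr_{P_0}[S]\\
&\le (1-\gamma)\Pr_{Q}[S] + \gamma\big(e^{\epsilon}\Pr_{Q}[S]+\delta\big)\\
&= \big(1+\gamma(e^{\epsilon}-1)\big)\Pr_{Q}[S] + \gamma\delta,
\end{align*}
which is exactly the stated bound with $\epsilon' = \log(1+\gamma(e^{\epsilon}-1))$ and $\delta' = \gamma\delta$. For the reverse direction I would use $\Pr_{P_0}[S]\ge e^{-\epsilon}(\Pr_{Q}[S]-\delta)$, yielding $\Pr_{P}[S]\ge \big(1-\gamma(1-e^{-\epsilon})\big)\Pr_{Q}[S] - \gamma e^{-\epsilon}\delta$, and then verify the two numerical inequalities $\tfrac{1}{1-\gamma(1-e^{-\epsilon})}\le 1+\gamma(e^{\epsilon}-1)$ and $\tfrac{\gamma e^{-\epsilon}}{1-\gamma(1-e^{-\epsilon})}\le \gamma$. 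Both reduce after clearing denominators to $e^{\epsilon}+e^{-\epsilon}-2=(e^{\epsilon/2}-e^{-\epsilon/2})^2\ge 0$ and to $e^{-\epsilon}\le 1$, so they hold for all $\epsilon\ge 0$ and $\gamma\in[0,1]$; as sanity checks, $\gamma=0$ gives $(0,0)$-confidentiality and $\gamma=1$ recovers $(\epsilon,\delta)$.

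The main obstacle I anticipate is not the algebra but the careful bookkeeping in the conditioning step: one must check that the only source of randomness touched by $\mathrm{Redact}_\pi$ is the single sentence $s$ (all other tokens are fixed), that $\mathrm{Redact}_\pi(D)$ and $\mathrm{Redact}_\pi(D')$ genuinely coincide on the event $\pi(s,x)=1$ (i.e.\ that masking the secret and then re-screening produces the same corpus, which is immediate for token-wise $\pi$ but deserves a word for sequence-labelling policies), and that conditioning on $\pi(s,x)=0$ leaves the posterior over $x$ equal to $\mu[\,\cdot\mid\pi(s,x)=0]$ so that the Bayesian-confidentiality hypothesis applies verbatim. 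Once these points are settled, the decomposition $P=(1-\gamma)Q+\gamma P_0$ is immediate and the rest is the display above.
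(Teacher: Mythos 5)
Your proof is correct, and the core decomposition is the same as the paper's: condition on whether $\pi$ detects the secret, observe that on the detection event $\mathrm{Redact}_\pi(D)$ coincides with $\mathrm{Redact}_\pi(D')$, and apply the hypothesis (with respect to the posterior $\mu[\cdot\mid\pi(s,x)=0]$) only on the missed branch. Where you differ is the final step: the paper phrases everything in terms of the Hockey-Stick divergence and invokes the ``Advanced Joint Convexity'' identity of Balle et al.\ (2018) to convert $H_{1+\gamma(e^\epsilon-1)}\bigl((1-\gamma)p+\gamma q\,\|\,(1-\gamma)p+\gamma r\bigr)$ into $\gamma$ times a convex combination of $H_{e^\epsilon}(q\|p)$ and $H_{e^\epsilon}(q\|r)$, each bounded by $\delta$. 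Because here the two mixtures share the same first component and, moreover, $p=r=Q$, that machinery is overkill, and your two-line pointwise computation $\Pr_P[S]\le(1-\gamma)\Pr_Q[S]+\gamma(e^\epsilon\Pr_Q[S]+\delta)$ delivers the identical constants $\bigl(\log(1+\gamma(e^\epsilon-1)),\gamma\delta\bigr)$ with no external theorem; the general subsampling lemma is only needed when the ``common'' components of the two mixtures differ. You also verify the reverse inequality $\Pr_Q[S]\le e^{\epsilon'}\Pr_P[S]+\delta'$, which the paper omits (its one-sided Hockey-Stick bound matches its one-sided definition of indistinguishability); note that your reverse direction implicitly uses the symmetric reading of the hypothesis, i.e.\ $\Pr_Q[S]\le e^\epsilon\Pr_{P_0}[S]+\delta$ as well, which is the standard convention but slightly more than the definition literally states. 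Your flagged caveat---that $\mathrm{Redact}_\pi(D)$ and $\mathrm{Redact}_\pi(D')$ must genuinely coincide on the detection event even for sequence-level policies---is a real gap that the paper's proof glosses over in exactly the same way, so you are not missing anything the authors supplied.
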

The proposition says that if the redaction policy is accurate for secrets $x\sim \mu$, then we can have a stronger confidentiality parameter that scales roughly at $\tilde{\epsilon} = O(\gamma \epsilon)$.  The idea behind the proof is that over the distribution of $x\sim \mu$, with probability $1-\gamma$, $\mathrm{Redact}_\pi(D) = \mathrm{Redact}_\pi(D')$, thus $\cA\circ \mathrm{Redact}_\pi(D) \equiv \cA\circ \mathrm{Redact}_\pi(D')$.  
With probability $\gamma$, $\mathrm{Redact}_\pi(D), \mathrm{Redact}_\pi(D')$ are different and conditioning on the fact that $\mathrm{Redact}_\pi$ fails to detect $x$. %, the distribution of $\mu$ gets updated to $\mu(x |\mathrm{Redact}_\pi(x)=0)$.
Note that $\pi$ is also applied to other text that are not sensitive, and could result in false positives, but they do not matter as the modification of $\mathrm{Redact}_\pi$ to $D$ and $D'$ will be identical. A full proof is given in the appendix.

Next we turn to deduplication.
\begin{proposition}[Group confidentiality under deduplication.]\label{prop:dedup}
	If $\cA$ ensures $(\epsilon(S),\delta(S))$-Group Confidentiality, then 
	$
	\cA\circ \mathrm{Dedup}
	$
	ensures  $(\epsilon(\mathrm{Unique}(S)),\delta(\mathrm{Unique}(S)))$-Group Confidentiality. 
\end{proposition}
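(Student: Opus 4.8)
The plan is to push the deduplication into the dataset and then invoke the group-confidentiality guarantee that $\cA$ is assumed to satisfy. Fix an arbitrary dataset $D$ that contains the list $S=[x_1,\dots,x_k]$ in up to $k$ of its data points, and let $D'$ be the dataset that replaces every element of $S$ with \texttt{<MASK>}. We must show that $\big(\cA(\mathrm{Dedup}(D)),\cA(\mathrm{Dedup}(D'))\big)$ is $(\epsilon(\mathrm{Unique}(S)),\delta(\mathrm{Unique}(S)))$-indistinguishable.

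The heart of the argument is a purely combinatorial lemma: $\widehat D:=\mathrm{Dedup}(D)$ is a dataset that contains the list $\mathrm{Unique}(S)$ in up to $|\mathrm{Unique}(S)|$ data points, and $\mathrm{Dedup}(D')$ is obtained from $\widehat D$ by replacing exactly the elements of $\mathrm{Unique}(S)$ with \texttt{<MASK>}. Informally: $\mathrm{Dedup}$ keeps only the first occurrence of every repeated data point and collapses all later ones to a bare \texttt{<MASK>}; hence, whenever the same secret value occurs in several (identical) data points, only one copy of it survives, and ranging over all distinct secret values these surviving copies are indexed precisely by $\mathrm{Unique}(S)$. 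The collapsed later copies are already \texttt{<MASK>} in both $\mathrm{Dedup}(D)$ and $\mathrm{Dedup}(D')$ and therefore drop out of the comparison, while data points untouched by $S$ are identical in $D$ and $D'$, so — $\mathrm{Dedup}$ being a deterministic function of the data-point contents — they remain identical afterwards.

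Granting the lemma, the proof finishes in one step. By hypothesis $\cA$ ensures $(\epsilon(\mathrm{Unique}(S)),\delta(\mathrm{Unique}(S)))$-group confidentiality, i.e.\ for \emph{every} dataset containing $\mathrm{Unique}(S)$ in up to $|\mathrm{Unique}(S)|$ data points and its masked version, the two outputs of $\cA$ are $(\epsilon(\mathrm{Unique}(S)),\delta(\mathrm{Unique}(S)))$-indistinguishable. Instantiating this with the pair $\big(\widehat D,\mathrm{Dedup}(D')\big)$ certified by the lemma yields the claim; since $D$ was arbitrary, $\cA\circ\mathrm{Dedup}$ ensures $(\epsilon(\mathrm{Unique}(S)),\delta(\mathrm{Unique}(S)))$-group confidentiality.

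I expect the combinatorial lemma to be the only delicate point, specifically the corner case in which masking the secrets of $S$ turns two data points that were \emph{distinct} in $D$ (differing only inside their secret spans) into \emph{identical} ones in $D'$: there $\mathrm{Dedup}$ collapses one of them in $D'$ but not in $D$, so $\mathrm{Dedup}(D')$ is not literally $\widehat D$ with the secrets of $\mathrm{Unique}(S)$ masked. I would handle this by noting that such extra collapses only ever \emph{remove} content, never add it, so $\mathrm{Dedup}(D')$ is still $\widehat D$ with a (possibly enlarged but still contiguous) span replaced by \texttt{<MASK>}, and that masking a superset of a secret still masks that secret; making the reduction to the clean case rigorous needs only that $\mathrm{Dedup}$ break ties among identical data points consistently between $D$ and $D'$ (e.g.\ retain the earliest occurrence), which holds by construction. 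If $\mathrm{Dedup}$ is additionally allowed to catch near-duplicates, as the paper remarks, this corner case does not arise at all, since the two offending data points are already collapsed in $D$.
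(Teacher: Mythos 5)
Your proof is correct and takes essentially the same route as the paper, whose entire argument is the one-line observation that $\mathrm{Dedup}(D)$ differs from $\mathrm{Dedup}(D')$ only by $\mathrm{Unique}(S)$, after which the group-confidentiality hypothesis is invoked exactly as you do. Your explicit treatment of the corner case where masking turns distinct data points into duplicates (so $\mathrm{Dedup}$ collapses them in $D'$ but not in $D$) is a genuine refinement that the paper's proof silently glosses over, and your resolution via consistent tie-breaking and enlarged masked spans is sound.
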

Deduplication provides a stronger protection for those cases where some secret  $x$ could appear multiple times in the dataset.

%We remark that both redaction and de-duplication need to be conducted in the particular ways above after the tokenization and truncation of the dataset for them to enjoy the above properties. In Appendix XXX, we discuss a few natural alternatives that differ from our proposed approaches subtlety and how they fail to amplify the confidentiality parameter, or even make it much worse.

% \begin{enumerate}
% 	\item Adding noise to all: Algorithm YYY: First deduplicate, then redact, then run DP-SGD with privacy parameter $(\epsilon,\delta)$.  
% 	\item Selective noise adding: Algorithm ZZZ: Use a conservative policy $\tilde{\pi}$ ($(1-\delta_2)$-recall with a very small $\delta_2$ under $\mu$), then deduplicate (only within what is selected by $\pi$ and exclude \texttt{<MASK>}). Any data point containing $x$ such that $\tilde{\pi}(x)=1$ or having a \texttt{<MASK>} is labeled as confidential. Then the algorithm runs SGD on non-confidential data and DP-SGD with privacy parameter $(\epsilon,\delta_1)$ on potentially confidential data. 
% \end{enumerate} 

\begin{theorem}\label{thm:guarantee}
Let DP-SGD from Algorithm~\ref{alg:drs-dpdgd} satisfies $(\epsilon,\delta)$-differential privacy. 
\vspace{-0.5em}
\begin{enumerate}
\item Assume $\pi_c(s, x)=1$ for all secret tokens $x$ in a sentence $s$ such that $\pi(s, x)=0$, then
Algorithm~\ref{alg:drs-dpdgd} satisfies $(\epsilon\mathbf{1}(\pi(s, x)=0),\delta)$-confidentiality.
\item Let $S$ be a group containing $m$ unique secrets such that $\pi_c(s, x)=1 \forall x\in s \text{ and } s \in \cS$ and that $\pi$ detects $\tilde{\gamma}$-proportion of the unique secrets in $S$. Then Algorithm~\ref{alg:drs-dpdgd} satisfies  $(\tilde{\gamma}m\epsilon, \tilde{\gamma}m e^{\tilde{\gamma}m\epsilon}\delta)$-group confidentiality for $S$.
\item Let $\pi,\pi_c$ has a a recall of $1-\gamma$ and $1-\delta_2$ respectively on $\mu$, then Algorithm~\ref{alg:drs-dpdgd} satisfies
$(\log(1+\gamma (e^{\epsilon}-1)),\gamma\delta + \delta_2)$-Bayesian Confidentiality for $\mu$.
\end{enumerate}
\end{theorem}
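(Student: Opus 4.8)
The plan is to reduce all three parts to two ingredients: a structural lemma about the training loop, and the three preprocessing propositions (Propositions~\ref{prop:masking_conf}, \ref{prop:masking_bayesian}, \ref{prop:dedup}) already established. Let $\cA_1$ denote the sub-algorithm of Algorithm~\ref{alg:drs-dpdgd} that takes an already-deduplicated, already-redacted dataset, forms the split $D^{pub},D^{pri}$, and runs the interleaved SGD/DP-SGD loop. The lemma I would prove first is: \emph{whenever a single record is modified or deleted and that record lies in $D^{pri}$, the two outputs of $\cA_1$ are $(\epsilon,\delta)$-indistinguishable.} This holds because (i) the membership test for $D^{pri}$ is evaluated sentence by sentence, so $D^{pub}$ is untouched by such a change; and (ii) modelling the run as an adaptive composition in which each public-SGD epoch is trivially $0$-DP in $D^{pri}$ (it never reads $D^{pri}$) while the private-SGD epochs compose to $(\epsilon,\delta)$-DP by hypothesis, the map $D^{pri}\mapsto\cA_1(\cdot)$ is $(\epsilon,\delta)$-DP. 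Hence $\cA_1$ is $(\epsilon,\delta)$-confidential --- and, for any prior, $(\epsilon,\delta)$-Bayesian confidential --- for every secret routed to $D^{pri}$, and by the standard group-privacy blow-up it is $(k\epsilon,ke^{k\epsilon}\delta)$-group confidential for any group of $k$ such secrets.

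For Part 1, the hypothesis ``$\pi_c(s,x)=1$ whenever $\pi(s,x)=0$'' together with the definition of $D^{pri}$ forces every secret token's host sentence into $D^{pri}$: if $\pi(s,x)=1$ some token of $s$ is $\pi$-flagged, and if $\pi(s,x)=0$ then $\pi_c$ flags $x$. So before redaction the core algorithm is $(\epsilon,\delta)$-confidential for each such token ($\mathrm{Dedup}$ only deletes occurrences, which cannot hurt, cf.\ Proposition~\ref{prop:dedup}), and layering $\mathrm{Redact}_\pi$ replaces $\epsilon$ by $\epsilon\mathbf{1}(\pi(s,x)=0)$ exactly as in Proposition~\ref{prop:masking_conf}. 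Part 2 is the same routing argument applied to all $m$ unique secrets of $S$: the group version of Proposition~\ref{prop:masking_conf} kills the contribution of every $\pi$-detected secret, leaving the $\tilde\gamma m$ undetected ones, Proposition~\ref{prop:dedup} collapses the (already trivial) duplicates, and feeding ``$k=\tilde\gamma m$ records of $D^{pri}$ changed'' into the group bound of the lemma yields $(\tilde\gamma m\epsilon,\ \tilde\gamma m e^{\tilde\gamma m\epsilon}\delta)$.

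Part 3 mirrors Proposition~\ref{prop:masking_bayesian} with one extra failure mode for the imperfect $\pi_c$. Partition the draw $x\sim\mu$ into $E_1=\{\pi\text{ detects }x\}$, $E_2=\{\pi\text{ misses }x,\ \pi_c\text{ detects }x\}$, $E_3=\{\pi\text{ and }\pi_c\text{ both miss }x\}$, with $\Pr[E_1]\ge 1-\gamma$, $\Pr[E_2]\le\gamma$, $\Pr[E_3]\le\delta_2$. On $E_1$, $\mathrm{Redact}_\pi$ masks $x$ and makes the preprocessed datasets coincide, so $\cM(D)\equiv\cM(D')$; on $E_2$, $x$ survives redaction but its host sentence is in $D^{pri}$, so the lemma gives conditional $(\epsilon,\delta)$-indistinguishability; on $E_3$ bound the conditional probability trivially by $1$. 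Writing $\Pr[\cM(D)\in T]$ as the convex combination of these three conditionals and using $p_1+p_2e^{\epsilon}\le 1+\gamma(e^{\epsilon}-1)$ for the multiplicative part and $p_2\delta+p_3\le\gamma\delta+\delta_2$ for the additive part gives $(\log(1+\gamma(e^{\epsilon}-1)),\ \gamma\delta+\delta_2)$-Bayesian confidentiality.

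The main obstacle is not any single step but the bookkeeping around the fixed order ``$\mathrm{Dedup}$ then $\mathrm{Redact}_\pi$ then split'' in Algorithm~\ref{alg:drs-dpdgd}: I would need to check that, for the neighbouring pair $(D,D')$, the combined effect of the two deterministic preprocessors is never worse than a single modified record routed to $D^{pri}$ --- a pathological coincidence could turn a partially masked sentence into an exact duplicate of another and let $\mathrm{Dedup}$ act differently on $D$ and $D'$. I would dispose of this with a short coupling/case analysis showing the difference set only shrinks under preprocessing, so none of the stated bounds degrade. The genuinely load-bearing step is the structural lemma, specifically the claim that interleaving non-private SGD epochs between DP-SGD epochs is harmless, which is exactly where composition-plus-post-processing does the work.
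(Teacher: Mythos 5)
Your proposal is correct and follows the same high-level route as the paper: reduce the theorem to Propositions~\ref{prop:masking_conf}--\ref{prop:dedup} plus the fact that the DP guarantee of the private training phase yields confidentiality for every secret whose host sentence lands in $D^{pri}$. The differences are in where you spend the effort, and they are worth noting. First, your structural lemma --- that the interleaved SGD/DP-SGD loop is $(\epsilon,\delta)$-DP with respect to $D^{pri}$ because the public epochs never read $D^{pri}$ and hence contribute nothing under adaptive composition --- is stated nowhere in the paper's proof; the paper simply asserts ``DP implies $(\epsilon,\delta)$-confidentiality.'' This is the right lemma to isolate (the paper's own Appendix remark about minibatches being wholly public or wholly private confirms it is load-bearing). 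Second, for Part~3 the paper invokes Proposition~\ref{prop:masking_bayesian} (whose proof goes through the hockey-stick divergence and advanced joint convexity) and then patches in the $\pi_c$-failure event with a rather garbled displayed inequality; your direct three-event convex combination over $E_1,E_2,E_3$ with $p_1+p_2e^\epsilon\le 1+\gamma(e^\epsilon-1)$ and $p_2\delta+p_3\le\gamma\delta+\delta_2$ is more elementary, self-contained, and yields exactly the stated constants. Your flagged concern about $\mathrm{Dedup}\circ\mathrm{Redact}_\pi$ interactions is also legitimate and is only addressed informally in the paper's ``devil is in the details'' appendix; the resolution you sketch (the symmetric difference between the preprocessed versions of $D$ and $D'$ stays confined to the one affected data point, which is routed to $D^{pri}$ in both worlds) is the right one. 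One small reading note: the theorem's Part~2 says $\pi$ ``detects'' a $\tilde\gamma$-proportion, but the bound $\tilde\gamma m\epsilon$ only makes sense if $\tilde\gamma m$ counts the \emph{undetected} unique secrets, which is how you (correctly) read it.
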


The theorem demonstrates that our  \method algorithm enjoys a full suite of confidentiality guarantees and they all benefit from the deduplication and redaction, particularly if $\pi$ has high recall.

Note that the \method algorithm achieves the worst-case confidentiality guarantee if we have a nontrivial conservative screening policy that outputs $\pi_c(x) = 1$ for \emph{all} secret $x$ that $\pi$ misses, or we simply run vanilla DP-SGD after deduplication and redaction. 
On the other hand, CRT still satisfies Bayesian confidentiality for each $\mu$ depending on the recall rate of $\pi_c$ under $\mu$.

\section{Experiments}
\label{sec:exps}

We evaluate \method by training two types of language model, LSTM and GPT-2, on two datasets: 1) MultiWOZ 2.2, a well-known human-written dialogue dataset and 2) CustomerSim, a simulated dialogue dataset for conversation generation. 

% \noindent\textbf{WikiText-2.} 
\paragraph{MultiWOZ 2.2} is an already-public dialogue dataset written by crowd-workers, which collects over 10,000 annotated dialogues spanning 8 domains \cite{zang2020multiwoz}. We use this dataset to show how \method works in real-world applications. Following US Department of Labor's guidance\footnote{https://www.dol.gov/general/ppii} on personal-identifiable information (PII), we treat all confidential information (e.g. email address, reference number, telephone number, etc.) as secrets. For the sequence labeling policy $\pi$ and conservative policy $\pi_c$, we build upon an NER model to do redaction. See Appendix \ref{sec:policy_details} for more details.

% \noindent\textbf{CustomerSim.} 
\paragraph{CustomerSim.}
% AI assistants are gaining more popularity, yet there is growing concern about privacy leakage of users as they are frequently required to provide sensitive information to facilitate the service in the dialog systems.
Following S-DP \citet{Shi2021SelectiveDP}, we simulate a dialog dataset called CustomerSim with synthetic user information. The dialog flow is simulated based on a fixed agenda and the language generation is template-based \cite{Zhao2018ZeroShotDG}. CustomerSim consists of 10 thousand examples and over one million tokens. We treat user name, address, phone number, order, and tracking number as secrets, and use a regular expression tester (regex) to detect them for the redaction process.

% \noindent\textbf{Experiment details.}
\paragraph{Experiment details.}
For LSTM model, we follow the setting in S-DP to choose a one-layer LSTM. Because S-DP requires hidden states of the sensitive input to be protected, it doesn't support more layers nor Bidirectional LSTM. Since the advent of Transformers \cite{Vaswani2017AttentionIA} significantly improves the capabilities of generative language models, we also test transformer-based language model GPT-2 \cite{Radford2019LanguageMA} from HuggingFace \cite{Wolf2019HuggingFacesTS}. As for deduplication, we use \texttt{SHA-1} \cite{jarvinen2004design} hash function to encode sequences to \texttt{SHA-1} hash code and then remove identical sequences based on the same hash code. For Bayesian Confidentiality, we treat the uniform distribution over the secret sequences as the distribution $\mu$. More experiment details can be found in Appendix \ref{sec:exp_details}.
\begin{figure*}[h]
\centering
\includegraphics[width=1\linewidth]{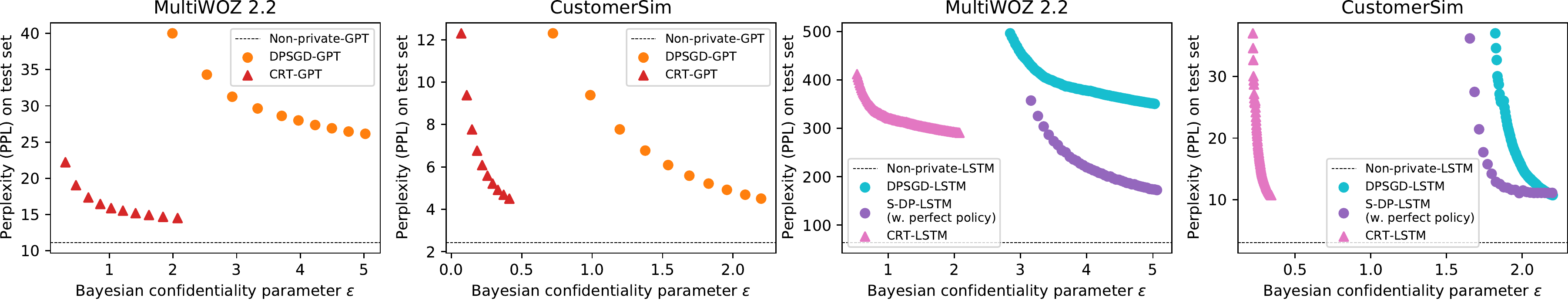}
\caption{Model utility and confidentiality guarantee on MultiWOZ 2.2 and CustomerSim datasets with $\mu$ being a uniform distribution over the secret sequences in each dataset. PPL: Perplexity on the test set. $\epsilon$: Privacy guarantee in Bayesian Confidentiality. We fix $\delta = 8e-5$ for all models. Since Selective DP-SGD with approximate policy gives $\epsilon = +\infty$, we show its result with a perfect screen policy. But when a perfect policy is available, \texttt{Redaction} only gives $\epsilon=0$ and achieves the PPL of vanilla training with no noise added (Non-private-GPT/LSTM). For other models we set $\gamma=0.1$ to show the result under approximate policy.}
\label{fig:main}
% \vspace{-3mm}
\end{figure*}
% \noindent\textbf{Baselines.}
\paragraph{Baselines.}
For LSTM model, we compare four different training approaches: (1) vanilla SGD (denoted by "Non-private-LSTM"), (2) Selective DPSGD (denoted by "S-DP-LSTM") (3) DPSGD (denoted by "DPSGD-LSTM") and (4) confidentially redacted training (denoted by "\method-LSTM"). While for GPT-2 model, we compare three different training approaches: (1) vanilla SGD (denoted by "Non-private-GPT"), (2) DPSGD (denoted by "DPSGD-GPT") and (3) \method (denoted by "\method-GPT"). Our implementation of S-DP-LSTM model is built upon \citet{Shi2021SelectiveDP}\footnote{https://github.com/wyshi/lm\_privacy}. We run the experiment for the GPT-2 model following \citet{Li2021LargeLM}\footnote{https://github.com/lxuechen/private-transformers}, in which they propose ghost clipping method to alleviate the computational challenge of running DP-SGD with large Transformers. 

All the models are trained five times to reduce randomness, and the parameters are tuned based on the validation set performances.

\section{Experimental Results}

\subsection{Evaluation procedure}
We need to evaluate both model utilities and privacy guarantees of the language models. We measure predictive perplexity (PPL) for the quality of LM. We also analyze the theoretical privacy budget $(\epsilon$, $\delta)$ and test whether language models are private under attacks detailed below. 

% \noindent\textbf{Canary insertion attack.} 
\paragraph{Canary insertion attack.}
Canary insertion is proposed as a testing methodology for quantitatively assessing the risk of \emph{unintended memorization} \cite{Carlini2019TheSS}. It inserts random sequences called canaries into the training dataset, then trains the model, and finally calculates the following exposure for the inserted canaries to measure a model’s potential for privacy risks. In our experiment, we randomly generate 10 canaries in the form of "\texttt{My ID is: <random 6-digit number here>}". Each canary is inserted into the training dataset 20 times to generate more salient differences between models.

\begin{definition}[Canary Exposure]
Given a canary $s[r]$, a model with parameters $\theta$, and the randomness space $\mathcal{R}$, the exposure of $s[r]$ is
$$
\operatorname{exposure}_{\theta}=\log _{2}|\mathcal{R}|-\log _{2} \operatorname{rank}_{\theta}(s[r])
$$
\end{definition}
After training, we calculate empirical model perplexity for all possibly-instantiated canaries and list them in sorted order. Then we can get the canary exposure based on the rank of a specific canary sequence $\operatorname{rank}_{\theta}(s[r])$ and the number of all possible candidates $|\mathcal{R}|$. In our setting, we show the highest canary exposure in 10 canaries. For example, if a canary ranks 1st among 1M candidates, the canary exposure is 19.93.

% \noindent\textbf{Membership inference attack.}
\paragraph{Membership inference attack.}
Membership Inference is a widely used privacy attack method. Given a non-privately trained model, an adversary can predict whether or not a particular example was used to train the model. We adopt the membership inference attack in \citet{Carlini2021ExtractingTD}. The general idea is to calculate the given samples’ perplexities under the model, rank them and choose the ones with the lowest perplexities, i.e., highest likelihood by the model. We can think of this process as training a binary classifier based on the perplexity feature. We also implement the group membership inference attack to show the group confidentiality. More details about the implementation can be found in the Appendix \ref{sec:mi_details}.

\subsection{Overall performance}

Figure \ref{fig:main} presents the results of model utilities and confidentiality guarantees across our models of interest on MultiWOZ 2.2 and CustomerSim datasets. Each point denotes a model for different epochs in a training process. Since the X-axis is $\epsilon$ in Bayesian Confidentiality (the lower the better) and the Y-axis is perplexity (the lower the better), a perfect model will lie in the bottom-left corner.
\method-GPT and DPSGD-GPT in general, perform better than S-DP-LSTM, \method-LSTM and, DPSGD-LSTM on the test sets. Our model \method-GPT's performance is close to Non-private-GPT in terms of PPL while preserving strong confidentiality. Besides, \method-GPT is better than DPSGD-GPT manifested by a much lower $\epsilon$, which demonstrates that approximately correct screening policy amplifies the confidentiality guarantee. 

Differences can be witnessed in the results from two different datasets: the models trained on CustomerSim achieve overall better performances than those trained on MultiWOZ. We think it's due to the fact that CustomerSim contains simple dialogs from template-based simulations. 

\subsection{Attack results}
Figure \ref{fig:result1}, \ref{fig:result2}, and \ref{fig:result3} present the results from canary insertion attack and individual/group membership inference attack on MultiWOZ 2.2 and CustomerSim datasets. The X-axis is the false negative rate $\gamma$ of screening policy $\pi$, ranging from 0.0 to 0.5; the Y-axis is the canary exposure (in Figure \ref{fig:result1}) and membership inference accuracy (in Figure \ref{fig:result2} and \ref{fig:result3}), which measures the effectiveness of the attacks. The lower the canary exposure or inference accuracy, the better protection the model provides against the attacks.

\begin{figure}[ht]
\centering
\includegraphics[width=1.0\linewidth]{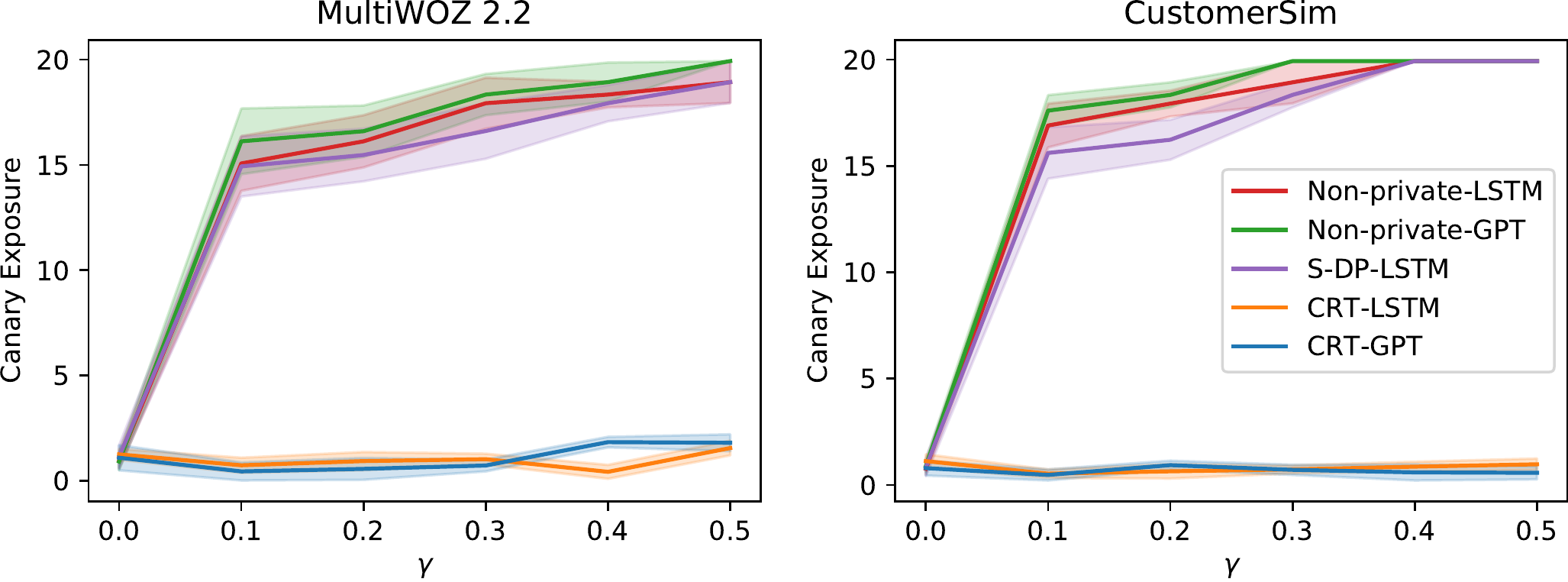}
\caption{Canary insertion attack result. \method achieves almost 0 canary exposure, which means it can prevent unintended memorization.}
\label{fig:result1}
% \vspace{-3mm}
\end{figure}

For canary insertion attack, it can be seen from Figure \ref{fig:result1} that the canary exposures for \method-LSTM and \method-GPT are both close to 0 which thus guarantee excellent confidentiality. Non-private-LSTM and Non-private-GPT with mask can also attain great protection at perfect screening policy accuracy ($\gamma=$ 0), nonetheless a rise in $\gamma$ results in a sharp increase in the exposure. It should be noticed that S-DP-LSTM also has high exposure, similar to Non-private models, given any $\gamma$ above 0. This is because that many sensitive data has been falsely identified as non-sensitive by the approximate policy, S-DPSGD does not protect these false negative samples and hence a privacy leakage. 

\begin{figure}[htbp]
\centering
\includegraphics[width=1.0\linewidth]{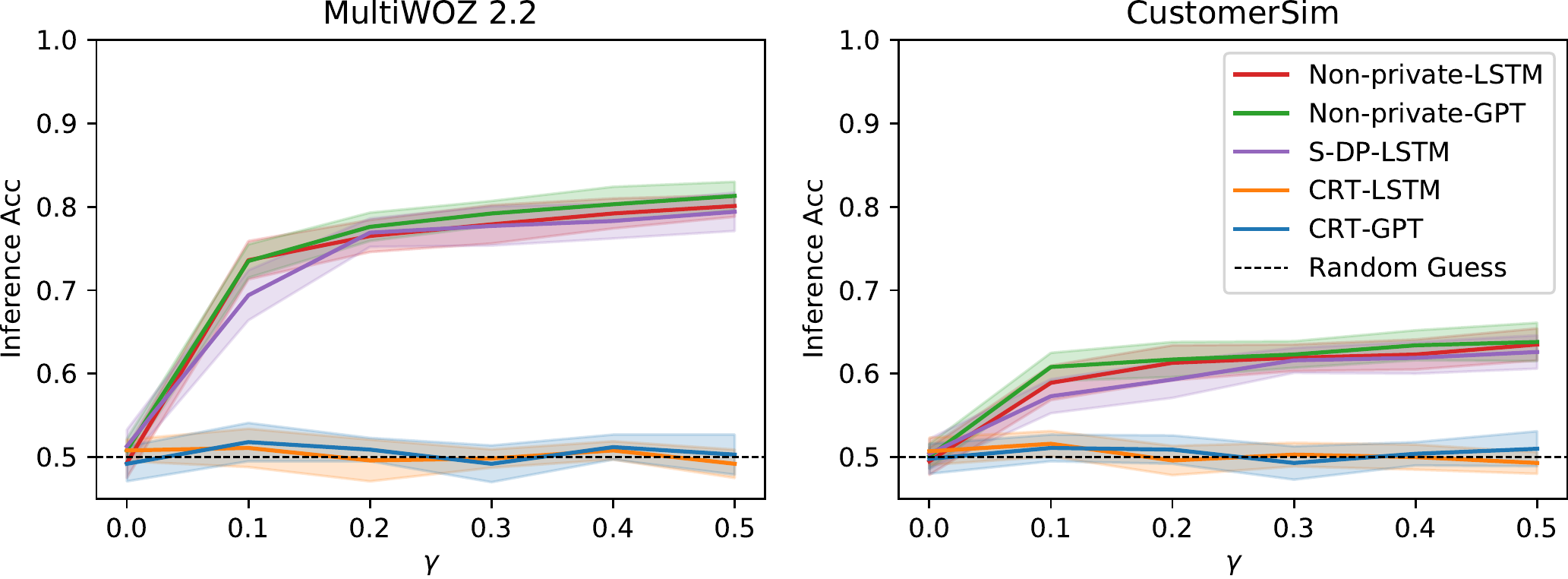}
\caption{Membership inference attack result. \method attains nearly 50\% accuracy, indicating that the adversary could not infer whether a data point is used for training. }
\label{fig:result2}
% \vspace{-2mm}
\end{figure}

\begin{figure}[htbp]
\centering
\includegraphics[width=1.0\linewidth]{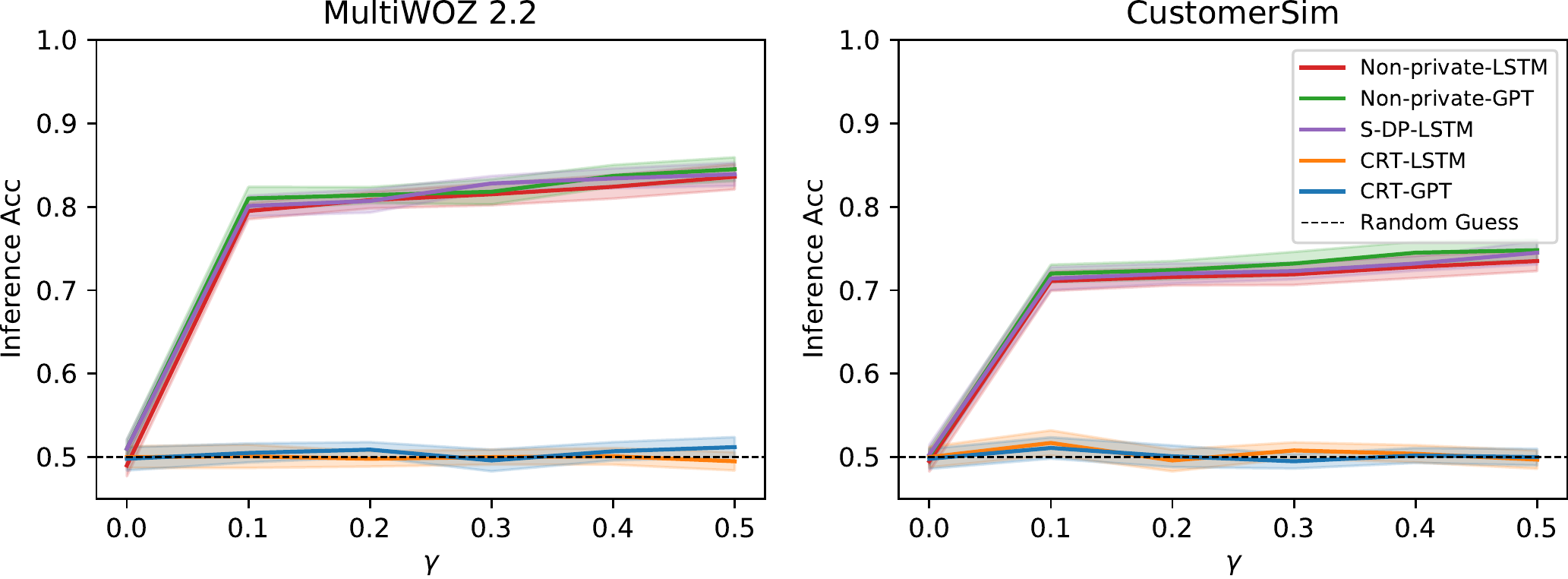}
\caption{Group membership inference attack result.}
\label{fig:result3}
% \vspace{-2mm}
\end{figure}

For membership inference attack, we compare the inference accuracy with the benchmark value of 0.5, which equals the random guess performance. In Figure \ref{fig:result2} and \ref{fig:result3}, we see that \method-LSTM and \method-GPT align well with the 0.5 horizontal line, suggesting that they are rather safe to the attack. The inference accuracy for Non-private-LSTM/Non-private-GPT/S-DP-LSTM, in contrast, surges above 0.5 as the false negative rate $\gamma$ deviates from 0.0, indicating that these models become vulnerable to the attack under non-perfect screen policy. In addition, Non-private and S-DP models show even worse protection under the group attack than the individual one in view of a higher inference accuracy at certain $\gamma$. 
\begin{figure}[htbp]
\centering
\includegraphics[width=0.9\linewidth]{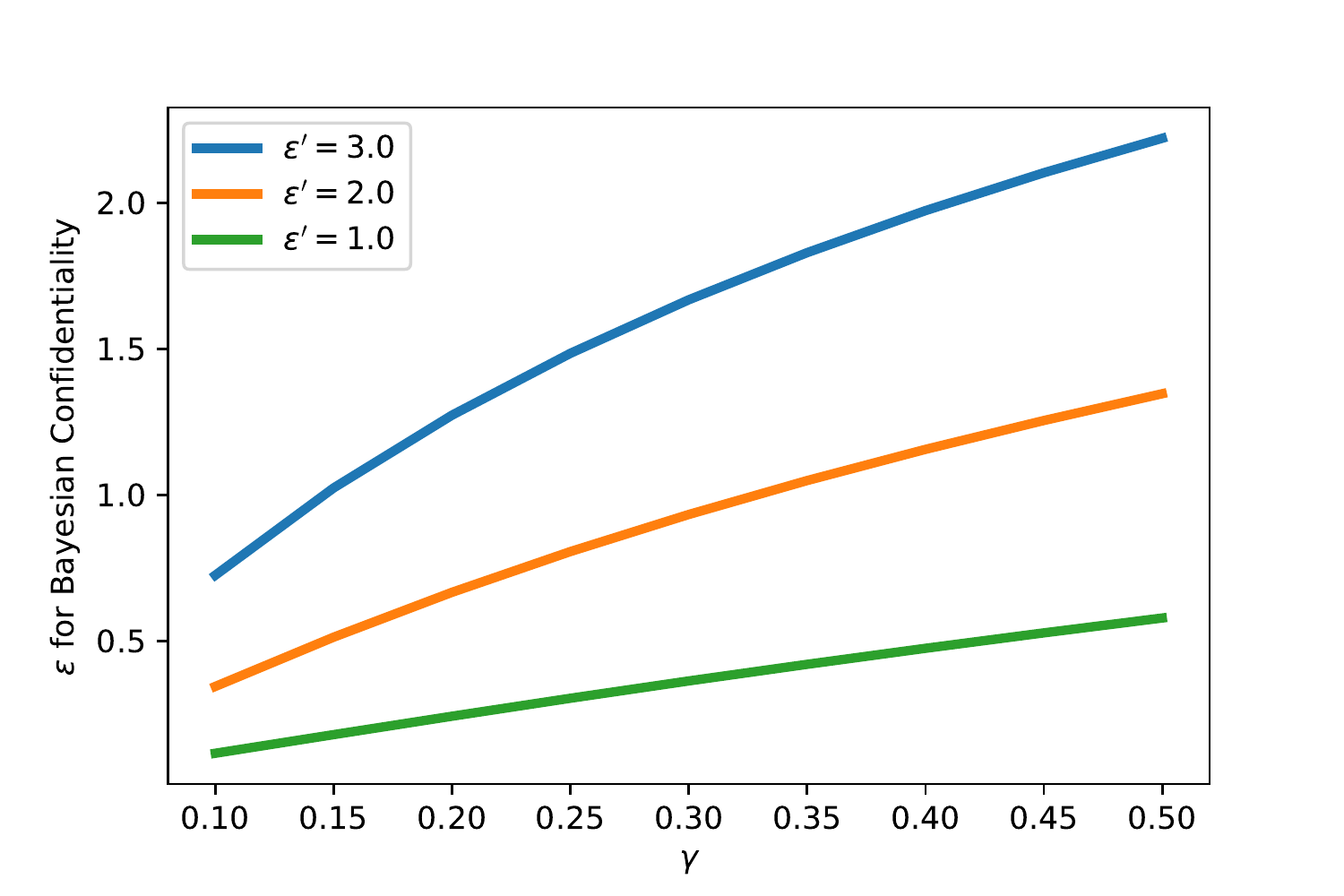}
\caption{Bayesian Confidentiality amplification result. \method helps to amplify the confidentiality guarantee.}
\label{fig:amp}
% \vspace{-3mm}
\end{figure}
\subsection{\method amplifies Bayesian Confidentiality guarantees}

Figure \ref{fig:amp} shows that confidentially redacted training can help to amplify the confidentiality guarantees. We set the $\epsilon'$ in DP-SGD fixed and show the corresponding $\epsilon$ in Bayesian Confidentiality with different screen policy $\pi$. Both $\epsilon'$ and $\epsilon$ are for $\delta=8e-5$. If the approximately screening policy $\pi$ has a high recall ($\gamma$ is small), we will achieve much improvement in the Bayesian Confidentiality parameter $\epsilon$ by deduplication and redaction. For example, with $(\epsilon'=1.0, \gamma=0.1)$, we reduce the $\epsilon$ to 0.12.

\section{Conclusion}
\label{sec:conclusion}

In this paper, we propose confidentially redacted training (\method), a method to train language models while protecting the secret texts. We introduce a new definition of confidentiality which quantifies the risk of leaking sensitive content. We prove the effectiveness of \method both theoretically and empirically on multiple datasets and language models.

\section{Broader Impact}
\label{sec:impact}
This work will alleviate ethical concerns of large-scale pre-trained language models. 
This paper provides one promising solution to an important aspect of NLP: training high quality language models for text generation without compromising confidential information. 
The current use cases of language models involve pretraining on public web corpus and fine-tuning on individual application data. 
However, the private application specific data often contains user-generated sensitive information. 
The proposed method in this paper aims to use as much individual fine-tuning data as possible, while does not leak or memorize any confidential information with provable guarantees. 
Without the method, one has to either use the general pretraining LM without fine-tuning or manually filter sensitive information and fine-tuning on the remaining. 
It can be applied in broader applications that need language models or text generation models. 

In our experiments, we use a simulation scheme to mimic confidential content in a real corpus. We did not compromise any real user's confidential information.

\section*{Acknowledgements}
The work was partially supported by NSF Award \# 2048091. XZ was supported by UCSB Chancellor’s Fellowship. We would like to thank the anonymous reviewers for their thoughtful comments. We would also like to thank Siqi Ouyang for the helpful discussion and Yang Gao for polishing up the draft.

\bibliography{paper}

\appendix
\onecolumn

\section{Appendix}
\label{sec:appendix}
\subsection{Illustration of our proposed algorithm}

\begin{figure}[h!]
\centering
\includegraphics[width=0.9\linewidth]{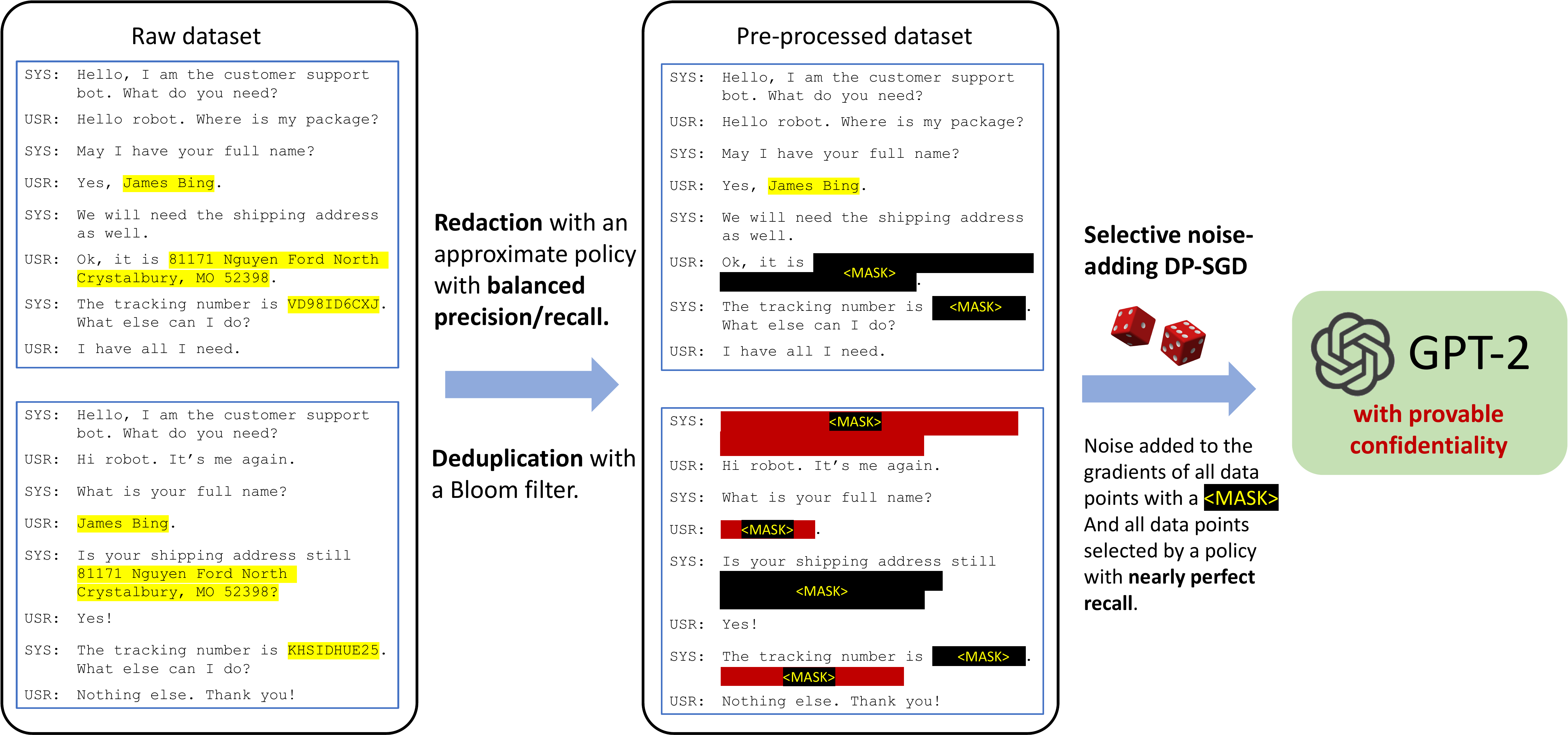}
\caption{An illustration of our proposed algorithm on a dataset with two data points. The first data point is the example from Figure~\ref{fig:example}, and the second data point is modified to illustrate the various aspects of the pre-processing steps.  The red-colored mask indicates the masks produced by deduplication just for illustration purposes. In the algorithm, both of them replace a sequence of tokens with the same special token \texttt{<MASK>}.}
\label{fig:illus_drsdpsgd}
\vspace{-2mm}
\end{figure}

\subsection{Proofs of technical results}
\begin{proof}[Proof of Proposition~\ref{prop:masking_conf}]
	The first statement straigtforwardly follows from that $ \mathrm{Redact}_\pi(D) =  \mathrm{Redact}_\pi(D')$ if $\pi(s, x) = 1$ and that $\mathrm{Redact}_\pi(D) $ and $\mathrm{Redact}_\pi(D')$ remain a pair of neighbors differing by only $x$.  The group confidentiality claims follows from the standard calculation of small group privacy from differential privacy, which applies the (single $x$) confidentiality iteratively. Let $\tilde{D} = \mathrm{Redact}_\pi(D)$, $\tilde{D}' = \mathrm{Redact}_\pi(D')$ and $\tilde{S}=[x_1,...,x_{\tilde{k}}]$ be the list of $S$ that are not masked by $\pi$.  For any measurable event $E$
	\begin{align*}
	\P[\cA\circ\mathrm{Redact}_\pi(D)\in E]  &= \P[\cA(\tilde{D})] \leq  e^{\epsilon{x_1}}\P[\cA(\tilde{D}_{-x_1, + \texttt{<MASK>}})\in E]  + \delta \\
	\leq&  e^{\epsilon{x_1} + \epsilon(x_2)}\P[\cA(\tilde{D}_{-x_{1,2}, + \texttt{<MASK>}^2})\in E] +  e^{\epsilon{x_1}}\delta + \delta\\
	&...\\
	\leq&  e^{\sum_{i=1}^{\tilde{k}}\epsilon{x_i}}\P[\cA(\tilde{D}')\in E] + \delta (1 + e^{\epsilon{x_1}} + e^{\epsilon{x_1}+\epsilon{x_2}} + ... + e^{\epsilon{x_1}+...+\epsilon{x_{\tilde{k}-1}}})\\
	\leq& e^{\tilde{\epsilon}(S) }\P[\cA\circ\mathrm{Redact}_\pi(D')\in E]   + k e^{\tilde{\epsilon}(S) } \delta  
	\end{align*}
\end{proof}

\begin{proof}[Proof of Proposition~\ref{prop:masking_bayesian}]
	Consider a dataset $D$ (in which one of the data point has $x\sim \mu$) and a fixed $D'$. 
	%Fix the part of $D$ that is not random，
	%\yw{The length of $x$ can be correlated with the content of $x$? Conditioning on the different length $x$, the recall probability will be different too.}
	Denote the probability distributions $p,q,r$ as shorthands for
	\begin{align*}
		p \sim \cA\circ \mathrm{Redact}_\pi(D) | \pi(s, x)=1\\
		q\sim\cA\circ \mathrm{Redact}_\pi(D) | \pi(s, x)=0\\
		r\sim \cA\circ \mathrm{Redact}_\pi(D') | \pi(s, x)=0
	\end{align*}
	Moreover, we use $\alpha p + (1-\alpha) q$ to denote the mixture distribution that samples from $p$ with probability $\alpha$ and $q$ with probability $1-\alpha$.
	
	Recall that the Hockey-Stick-divergence characterization of $(\epsilon,\delta)$-indistinguishsability \citep{barthe2013beyond}, which says that $(P,Q)$ are   $(\epsilon,\delta)$-indistinguishsable if and only if
	$$
	H_{e^\epsilon}(P\|Q) :=  \E_{y\sim Q}[ (\frac{dP}{dQ}(y) - e^\epsilon)_+  ] \leq \delta.
	$$
	It suffices for us to bound the following quantity:
	\begin{align*}
		&H_{1+\gamma(e^\epsilon-1)}(\cA\circ \mathrm{Redact}_\pi(D) \| \cA\circ \mathrm{Redact}_\pi(D')) = H_{e^\epsilon}((1-\gamma)p  +\gamma q\| (1-\gamma)p  +\gamma r)\\
		=&\gamma H_{e^\epsilon}( q\| (1-\beta)p  +\beta r) \leq \gamma \left( (1-\beta)H_{e^\epsilon}( q\| p)  +\beta H_{e^\epsilon}( q\| r)\right)
	\end{align*}
	where $\beta = \frac{1+\gamma(e^\epsilon-1)}{e^\epsilon}$. In the above, the second line follows from Theorem 2 of \citep{balle2018privacy} (an identity called ``Advanced Joint Convexity'' by the authors) and the inequality is due to the (standard) joint convexity of the Hockey-Stick divergence. It remains to bound $H_{e^\epsilon}( q\| p)$ and $H_{e^\epsilon}( q\| r)$.
	
	Check that $p,r,\cA\circ \mathrm{Redact}_\pi(D')$ are identically distributed and that $H_{e^\epsilon}( q\| r) \leq \delta$ by our assumption on $\cA$'s Bayesian confidentiality guarantee w.r.t. $\mu(x | \pi(s, x)=0)$. This completes the proof.
\end{proof}

\begin{proof}[Proof of Proposition~\ref{prop:dedup}]
	The proof is straightforward as $\mathrm{Dedup}(D)$ differs from $\mathrm{Dedup}(D')$  only by $\mathrm{Unique}(S)$.
\end{proof}

\begin{proof}[Proof of Theorem~\ref{thm:guarantee}]
The proof for the first statement follows from the fact that DP implies $(\epsilon,\delta)$-confidentiality and Proposition~\ref{prop:masking_conf}. Notably, if $\pi_c$ catches all $x$ that is missed by $\pi$, then we get that for all secret $x$, $\epsilon(x) \leq \epsilon$.

The proof of the second statement applies Proposition~\ref{prop:dedup} and the second part of Proposition~\ref{prop:masking_conf}.

The proof of the third statement applies Proposition~\ref{prop:masking_bayesian} but requires a separate treatment of the case when $x$ is missed by both $\pi$ and $\pi_c$.
Let the event that a secret $x$ is not selected by the conservative policy be $E$ and let $\cA$ be a generic algorithm satisfying $(\epsilon,\delta_1)$ Bayesian confidentiality under $\mu$,
\begin{align*}
\P[\cA(D)\in S] &\leq    \P[\cA\circ\mathrm{Redact}_\pi(D)\in S\subset E^c] + \delta\\
&\leq  e^{\epsilon} \P[\cA(D')\in S\subset E^c] + \delta_1 + \delta_2 \\
&\leq e^{\epsilon} \P[\cA(D')\in S]  + \delta_1 + \delta_2.
\end{align*}
This completes the proof.
\end{proof}

\subsection{More details on experiments}\label{sec:exp_details}
We choose the one-layer LSTM with an embedding size of 200 and a hidden size of 200.  We choose distill-gpt2\footnote{https://huggingface.co/distilgpt2} as the GPT-2 model, which has 6 layers, 768 dimension and 12 heads. Vocabulary size for GPT-2 is 50257.
Our experiments are conducted on NVIDIA TITAN-Xp GPU. For LSTM models, we tune the hyperparameters of the learning rate (lr) among \{20, 10, 5, 1, 0.1, 0.05, 0.01\}, batch size (bs) and the epochs among \{5, 10, 30, 50, 100\}. We finally choose \{lr=20, bs=256, epochs=50\} for Non-private-LSTM, \{lr=0.1, bs=5, epochs=50\} for S-DPSGD-LSTM and \{lr=0.05, bs=10, epochs=100\} for \method-LSTM. The same set of hyperparameters are tuned for GPT model as well. Our final choice for DPSGD-GPT/\method-GPT model is \{lr=5e-4, bs=256, epochs=10\}. The actual run-time of algorithms depends on implementation details. Here, we outline estimates of the run-time for training. Running one epoch on \method-LSTM takes 2 hours wheras the same task on \method-GPT only takes 30 minutes since the implementation of \citet{Li2021LargeLM} is highly efficient. We use autodp\footnote{https://github.com/yuxiangw/autodp}, an automating differential privacy computation for the privacy analysis. Noise scale $\sigma$ is calculated numerically so that a DP budget of $(\epsilon, \delta)$ is spent after $T$ epochs.

\subsection{Redaction policy details}\label{sec:policy_details}
We build the sequence labeling policy based on trimming one NER model\footnote{https://huggingface.co/flair/ner-english-ontonotes-fast} trained on OntoNotes-5.0 \cite{ontonote} dataset. We modify the last layer of the NER model and set the threshold for the output scores to enable abnormal/sensitive data detection. For the screen policy $\pi$, we set the threshold to be 0.3 for all predictions with OntoNotes tags. For the conservative policy $\pi_c$, we select all predictions with tags and all plain texts with scores smaller than 0.9 to be sensitive data. We manually label 200 data points and find that the conservative policy $\pi_c$ can achieve 100\% recall with lots of false positives and that $\pi$ can achieve 90\% recall with few false positives.  

\subsection{Membership inference attack details} \label{sec:mi_details}
In our experiments, we manually construct a dataset with 2000 sequences. We select 1000 sequences from the protected secrets used in the training data. And we randomly generate 1000 samples of similar format which are not used in the training data. In this way, a random guess generates an accuracy of 50\%. For MultiWoz 2.2, we use sentences with reference numbers as the secrets. For CustomerSim, we choose customer addresses as the secrets. 

In order to show group confidentiality guarantees, we also conduct group membership inference attack. In this setting, we construct a dataset with 2000 groups, each of which includes 20 sentences. One half of the groups are ``sensitive groups" with all 20 sentences drawn from protected secrets and the other half are "insensitive groups" with all 20 sentences being random. We build the classifier based on the sum of the perplexities in one group.

\subsection{``The devil is in the details'' -- how things could go wrong with seemingly inocuous changes to the algorithm.}\label{sec:devils}

In this section, we highlight various aspects of our algorithms and why certain choices in the pre-processing steps need to be done in the specific way we recommend for our results to hold for them.

\begin{enumerate}
\item It is important that the definition of confidentiality is defined with respect to a perfectly redacted version of the dataset. If we define it as in selective differential privacy, then there will \emph{not} be an amplification effect from redaction. This is because if we replace a secret $x$ that can be detected by $\pi$ with another $x'$ that cannot be detected by $\pi$, then even if $x$ is replaced with \texttt{<MASK>}, $x'$ will not be and the two datasets are still different after redaction.  In addition, the S-DP definition will not be useful for us we do not know how to define a confidentiality parameter specific for each $x$ or Bayesian confidentiality parameter for each $\mu$
    \item Tokenization and  splitting into individual ``sentences'' (data points) should go before redaction / de-duplication. Otherwise redaction with an approximate screening policy and with an ideal screening policy, or deduplication may cause \emph{misalignments}, resulting in almost all data points being different in the preprocessed version of $D$ and $D'$.
    \item Each data point should contain only ``whole'' natural sentences, otherwise the sensitive part of a natural sentence could split into two data points.
    \item Deduplication steps should replace duplicate text with the same \texttt{<MASK>}, otherwise \texttt{<MASK\_Dedup>} and \texttt{<MASK\_Redact>} are not the same so even if all secrets are masked,  there will be a difference between the pre-processed versions of $D$ and its neighbor, while in our approach there are no differences and we achieve perfect confidentility (with $\epsilon=0$).
    \item Any data point containing \texttt{<MASK>} needs to be put in $D^{pri}$.  This is because otherwise our algorithm that works on $D'$ will be a deterministic algorithm that is perfectly distinguishable from the alternative world where the algorithm is random because the approximate policy $\pi$ fails to redact certain secrets $x$.
    \item In the DP-SGD algorithm, the sampled minibatches should contain the whole minibatch from $D^{pri}$ or the whole minibatch from $D^{pub}$. Otherwise the noise always need to be added and the algorithm is identical to the vanilla DP-SGD, and there is no benefit of having a portion of the data being public comparing to all of the data are private.
\end{enumerate}
  
% \begin{figure}[h]
% \centering
% \includegraphics[width=0.5\linewidth]{}
% \caption{A simulated ROC curve.}
% \label{fig:roc}
% \end{figure}

% \begin{figure}[t]
%  \centering
%  \includegraphics[width=1.0\linewidth]{fig/S-DPSGD.pdf}
%  \caption{Selective DP-SGD applys regular SGD on non-private variables and DP-SGD on private variables. \yw{What purpose does this figure serve?  I suspect careless readers may mistaken this as our algorithm. If you are using this to illustrate why \citep{Shi2021SelectiveDP} doesn't work beyond LSTM, maybe create a section in the appendix to talk about comparison to \citep{Shi2021SelectiveDP} and add this figure there. You could leave a pointer in the related work section.}}
%  \label{fig:SDPSGD}
%  \vspace{-2mm}
% \end{figure}

\end{document}